\theoremstyle{plain}
\newtheorem{theorem}{Theorem}[section]
\newtheorem{proposition}[theorem]{Proposition}
\theoremstyle{definition}
\theoremstyle{remark}
\def\eqref#1{equation~\ref{#1}}
\def\1{\bm{1}}
\def\vh{{\bm{h}}}
\def\vy{{\bm{y}}}
\def\mH{{\bm{H}}}
\def\mK{{\bm{K}}}
\def\mX{{\bm{X}}}
\def\mY{{\bm{Y}}}
\DeclareMathAlphabet{\mathsfit}{\encodingdefault}{\sfdefault}{m}{sl}
\SetMathAlphabet{\mathsfit}{bold}{\encodingdefault}{\sfdefault}{bx}{n}
\def\gG{{\mathcal{G}}}
\def\gN{{\mathcal{N}}}
\def\gO{{\mathcal{O}}}
\def\gS{{\mathcal{S}}}
\def\sA{{\mathbb{A}}}
\def\sK{{\mathbb{K}}}
\def\sM{{\mathbb{M}}}
\def\sR{{\mathbb{R}}}
\def\sS{{\mathbb{S}}}
\def\sV{{\mathbb{V}}}
\newcommand\db[1]{\textcolor{darkblue}{#1}}
\newcommand\dr[1]{\textcolor{darkred}{#1}}
\newcommand{\where}[0]{\text{where}}
\newcommand{\WLS}[0]{\mathtt{WLS}}
\newcommand{\gnn}[0]{\texttt{GNN}}
\newcommand{\vphi}[0]{\bm{\phi}}
\definecolor{weakgray}{HTML}{e7e7e7}
\definecolor{weakblue}{HTML}{E3F2FD}
\definecolor{weakred}{HTML}{FFEBEE}
\definecolor{midblue}{HTML}{bbdefb}
\definecolor{midred}{HTML}{ffcdd2}
\definecolor{darkblue}{HTML}{1565c0}
\definecolor{darkred}{HTML}{c62828}
\newcommand{\WLKSZD}[0]{WLKS-$\{0,D\}$}
\newcommand{\sub}[0]{\text{sub}}
\newcommand{\PPIBP}[0]{\textsf{\fontsize{8.5pt}{8.5pt}\selectfont PPI-BP}}
\newcommand{\HPOMetab}[0]{\textsf{\fontsize{8.5pt}{8.5pt}\selectfont HPO-Metab}}
\newcommand{\HPONeuro}[0]{\textsf{\fontsize{8.5pt}{8.5pt}\selectfont HPO-Neuro}}
\newcommand{\EMUser}[0]{\textsf{\fontsize{8.5pt}{8.5pt}\selectfont EM-User}}
\newcommand{\PPIBPb}[0]{\textsf{\fontsize{8.5pt}{8.5pt}\selectfont PPI-BP} }
\newcommand{\HPONeurob}[0]{\textsf{\fontsize{8.5pt}{8.5pt}\selectfont HPO-Neuro} }
\newcommand{\Density}[0]{\textsf{\fontsize{8.5pt}{8.5pt}\selectfont Density}}
\newcommand{\CutRatio}[0]{\textsf{\fontsize{8.5pt}{8.5pt}\selectfont Cut-Ratio}}
\newcommand{\Coreness}[0]{\textsf{\fontsize{8.5pt}{8.5pt}\selectfont Coreness}}
\newcommand{\Component}[0]{\textsf{\fontsize{8.5pt}{8.5pt}\selectfont Component}}
\newcommand{\CutRatiob}[0]{\textsf{\fontsize{8.5pt}{8.5pt}\selectfont Cut-Ratio} }
\newcommand{\Corenessb}[0]{\textsf{\fontsize{8.5pt}{8.5pt}\selectfont Coreness} }
\title{Generalizing Weisfeiler-Lehman Kernels \\ to Subgraphs}
\author{Dongkwan Kim \& Alice Oh \\
KAIST, Republic of Korea \\
\texttt{dongkwan.kim@kaist.ac.kr, alice.oh@kaist.edu} \\
}
\begin{document}

\maketitle

\begin{abstract}
Subgraph representation learning has been effective in solving various real-world problems. However, current graph neural networks (GNNs) produce suboptimal results for subgraph-level tasks due to their inability to capture complex interactions within and between subgraphs. To provide a more expressive and efficient alternative, we propose WLKS, a Weisfeiler-Lehman (WL) kernel generalized for subgraphs by applying the WL algorithm on induced $k$-hop neighborhoods. We combine kernels across different $k$-hop levels to capture richer structural information that is not fully encoded in existing models. Our approach can balance expressiveness and efficiency by eliminating the need for neighborhood sampling. In experiments on eight real-world and synthetic benchmarks, WLKS significantly outperforms leading approaches on five datasets while reducing training time, ranging from 0.01x to 0.25x compared to the state-of-the-art.
\end{abstract}

\section{Introduction}
Subgraph representation learning has effectively tackled various real-world problems~\citep{bordes2014question,luo2022shine,hamidi2022subgraph,maheshwari2024timegraphs}. However, existing graph neural networks (GNNs) still produce suboptimal representations for subgraph-level tasks since they fail to capture arbitrary interactions between and within subgraph structures. These GNNs cannot capture high-order interactions beyond and even in their receptive fields. Thus, state-of-the-art models for subgraphs have to employ hand-crafted channels~\citep{alsentzer2020subgraph}, node labeling~\citep{ wang2022glass}, and structure approximations~\citep{kim2024translating} to encode subgraphs' complex internal and border structures.

As an elegant and efficient alternative, we generalize graph kernels to subgraphs, which measure the structural similarity between pairs of graphs. We propose WLKS, the Weisfeiler-Lehman (WL) Kernel for Subgraphs based on WL graph kernel~\citep{shervashidze2009fast}. Specifically, we apply the WL algorithm~\citep{leman1968reduction} on induced $k$-hop subgraphs around the target subgraph for all possible $k$s. The WL algorithm's output (i.e., the color histogram) for each $k$ encodes structures in the receptive field of the $k$-layer GNNs; thus, the corresponding kernel matrix can represent the similarity of $k$-hop subgraph pairs. A classifier using this kernel can be trained without GPUs in a computationally efficient way compared to deep GNNs.

To enhance the expressive power, we linearly combine kernel matrices of different $k$-hops. The motivation is that simply using larger hops for WL histograms does not necessarily lead to more expressive representations. We theoretically demonstrate that WL histograms of the $(k+1)$-hop are not strictly more expressive than those of $k$-hop in distinguishing isomorphic structures, while $(k+1)$-hop structures include entire $k$-hop structures. Therefore, combining kernel matrices across multiple $k$-hop levels can capture richer structural information around subgraphs.

However, sampling $k$-hop subgraphs can increase the time and space complexity, as the number of nodes in the $k$-hop neighborhoods grows exponentially~\citep{hamilton2017inductive}. To mitigate this issue, we choose only two values of $k$: 0 and the diameter $D$ of the global graph. No neighborhood sampling is required for the case where $k = 0$ since it only uses the internal structure. When $k$ is set to the diameter $D$, the expanded subgraph encompasses the entire global graph, making the $k$-hop neighborhood identical for all subgraphs. Consequently, there is no need for explicit neighborhood sampling in this case; we only perform the WL algorithm on the global graph once. This approach balances expressiveness and efficiency, providing a practical solution for subgraph-level tasks.

We evaluate WLKS's classification performance and efficiency with four real-world and four synthetic benchmarks~\citep{alsentzer2020subgraph}. Our model outperforms the best-performed methods across five of the eight datasets. Remarkably, this performance is achieved with $\times 0.01$ to $\times 0.53$ training time compared to the state-of-the-art models. Moreover, unlike existing models, WLKS does not require pre-computation, pre-training embeddings, utilizing GPUs, and searching a large hyperparameter space.

The main contributions of our paper are summarized as follows. First, we propose WLKS, a generalization of graph kernels to subgraphs. Second, we theoretically show that combining WLKS matrices from multiple $k$-hop neighborhoods can increase the expressiveness. Third, we evaluate our method on real-world and synthetic benchmarks and demonstrate superior performance in a significantly efficient way. We make our
code available for future research\footnote{\href{https://github.com/dongkwan-kim/WLKS}{https://github.com/dongkwan-kim/WLKS}}.

\section{Related Work}
WLKS is a `graph kernel' method designed for `subgraph representation learning.' This section explains both of these areas and their relationship to our model.

\paragraph{Subgraph Representation Learning}

Subgraph representation learning can address various real-world challenges by capturing higher-order interactions that nodes, edges, or entire graphs cannot model. For example, subgraphs can formulate diseases and patients in gene networks~\citep{luo2022shine}, teams in collaboration networks~\citep{hamidi2022subgraph}, and communities in mobile game user networks~\citep{zhang2023constrained}. Existing methods are often domain-specific~\citep{zhang2023constrained,li2023self,trumper2023performance,ouyang2024bitcoin,maheshwari2024timegraphs} or rely on strong assumptions about the subgraph~\citep{meng2018subgraph,hamidi2022subgraph,kim2022models,luo2022shine,liu2023position}, limiting their generalizability.

Recent deep graph neural networks designed for subgraph-level tasks can apply to any subgraph type without specific assumptions. However, they often generate suboptimal representations due to their inability to capture arbitrary interactions between and within subgraph structures. They struggle to account for high-order interactions beyond their limited receptive fields; thus, they should incorporate additional techniques, including hand-crafted channels~\citep{alsentzer2020subgraph}, node labeling~\citep{wang2022glass}, random-walk sampling~\citep{jacob2023stochastic}, and structural approximations~\citep{kim2024translating}. In contrast, we design kernels that can capture local and global interactions of subgraphs, respectively, to enable simple but strong subgraph prediction. We formally compare our proposed WLKS with representative prior models in Appendix~\ref{appendix:comparison}.

\paragraph{Graph Kernels}

Graph kernels are algorithms to measure the similarity between graphs to enable the kernel methods, such as Support Vector Machines (SVMs) to graph-structured data~\citep{vishwanathan2010graph}. Early examples measure the graph similarity based on random walks~\citep{kashima2003marginalized} or shorted paths~\citep{borgwardt2005shortest}. One of the most influential graph kernels is the Weisfeiler-Lehman (WL) kernel~\citep{shervashidze2009fast}, which leverages the WL isomorphism test to refine node labels iteratively, improving the expressiveness of the graph structure comparison. While the WL test is designed for graph isomorphism, WL kernels capture structural similarities using the WL test's outcomes even when graphs are not strictly isomorphic (See Appendix~\ref{appendix:wl_test_and_kernel} for detailed comparison). Kernels for graph-level prediction by counting, matching, and embedding subgraphs have been deeply explored~\citep{shervashidze2009efficient, kriege2012subgraph, yanardag2015deep, narayanan2016subgraph2vec}. However, there has been no research on kernels to solve subgraph-level tasks by computing the similarity of subgraphs and their surroundings. To the best of our knowledge, our paper is the first to investigate this approach.

\section{WL Graph Kernels for Subgraph-Level Tasks}
This section introduces WLKS, the WL graph kernels generalized for subgraphs. We first describe the original WL algorithm and its extension for subgraphs, which is a foundation of WLKS. Then, we suggest WLKS and its enhancement of expressiveness and efficiency. Finally, we introduce how to integrate continuous features with WLKS models.

\subsection{Subgraph Representation Learning}

We first formalize subgraph representation learning as a classification task. Let $\gG = (\sV, \sA)$ represent a global graph, where $\sV$ denotes a set of nodes (with $|\sV| = N$) and $\sA \subset \sV \times \sV$ represents a set of edges (with $|\sA| = E$). A subgraph $\gS = (\sV^{\sub}, \sA^{\sub})$ is a graph formed by subsets of nodes and edges in the global graph $\gG$ (with $|\sV^{\sub}| = N^{\sub}$ and $|\sA^{\sub}| = E^{\sub}$). There exists a set of $M$ subgraphs, with $M < N$, denoted as $\sS = \{ \gS_1, \gS_2, \dots, \gS_M \}$. In a subgraph classification task, the model learns representation $\vh_i \in \sR^{F}$ and the logit vector $\vy_i \in \sR^{C}$ for $\gS_i$ where $F$ and $C$ are the dimension size and the number of classes, respectively.

\subsection{1-WL Algorithm for $k$-hop Subgraphs}

\paragraph{1-WL for Graphs}

We briefly introduce the 1-dimensional Weisfeiler-Lehman (1-WL) algorithm. As illustrated in Algorithm~\ref{alg:wl}, the 1-WL is an iterative node-color refinement by updating node colors based on a multiset of neighboring node colors. This process produces a histogram of refined coloring that captures graph structure, which can distinguish non-isomorphic graphs in the WL isomorphism test.

\begin{algorithm}[H]\label{alg:wl}
\DontPrintSemicolon
\SetAlgoLined
\vspace{0.1cm}
\KwIn{Graph $\gG = (\sV, \sA)$ and $T$ iterations}
\KwOut{Refined node coloring $( c_{1}^{T}, c_{2}^{T}, ..., c_{|\sV|}^{T} )$ for nodes in $\sV$ after $T$ iterations} \vspace{0.1cm}
Initialize $c_{v}^{0}$ for all $v \in \sV$\;
\For{$i \leftarrow 1$ \KwTo $T$}{
    \For{node $v \in \sV$}{
        $\sM_v \leftarrow$ multiset of labels $\{ c_{u}^{i-1} \mid u \in \gN(v) \}$\;
        $\Tilde{c}_{v}^{i} \leftarrow$ concatenate $c_{v}^{i-1}$ and sorted $\sM_{v}$\;
    }
    Use a bijective function to map each unique $\Tilde{c}_{v}^{i}$ to a new color $c_{v}^{i}$\;
}
\Return{$ ( c_{1}^{T}, c_{2}^{T}, ..., c_{|\sV|}^{T} ) $}
\caption{1-WL Algorithm}
\vspace{0.1cm}
\end{algorithm}

\paragraph{1-WL for Subgraphs (WLS)}

We then propose the WLS, the 1-WL algorithm generalized for subgraphs. Since surrounding structures are the core difference between graphs and subgraphs, the main contribution of the WLS lies in encoding the $k$-hop neighborhoods of the subgraph. Here, $k$ will be denoted in superscript as $\WLS^k$ if a specific $k$ is given.

Formally, for a subgraph $\gS = (\sV^{\sub}, \sA^{\sub})$ in a global graph $\gG = (\sV, \sA)$, the $\WLS^{k}$'s goal is to get the refined colors of nodes in $\sV^{\sub}$, where each color represents a unique subtree in $k$-hop neighborhoods. As in the Algorithm~\ref{alg:wlks}, we first extract the $k$-hop subgraph $\gS^{k}$ of $\gS$, which contains all nodes in $\gS$ as well as any nodes in $\gG$ that are reachable from the nodes in $\gS$ within $k$ hops. The 1-WL algorithm is then run on this induced $k$-hop subgraph to generate the colors of the nodes in $\gS^k$. The WLS returns the node coloring belonging to the original $\gS$, not in $\gS^k$. In general, $k$-hop neighborhoods are much larger than the original subgraph, so using all the colors in $\gS^k$ will likely produce a coloring irrelevant to the target subgraph.

\begin{algorithm}[H]\label{alg:wlks}
\DontPrintSemicolon
\SetAlgoLined
\vspace{0.1cm}
\KwIn{A subgraph $\gS = (\sV^{\sub}, \sA^{\sub})$, a global graph $\gG = (\sV, \sA)$, and $T$ iterations}
\KwOut{Refined node coloring $( c_{1}^{T}, c_{2}^{T}, ..., c_{|\sV^{\sub}|}^{T} )$ for nodes in $\sV^{\sub}$ after $T$ iterations} \vspace{0.2cm}
Sample $\gS^{k} = (\sV^{\sub,k}, \sA^{\sub,k})$, which is the induced $k$-hop subgraph of $\gG$ around all nodes in $\gS$ reachable within $k$ hops\;
Run  1-WL (Algorithm \ref{alg:wl}) on $(\gS^{k}, T)$ to get node colors in $\sV^{\sub, k}$ \;
\Return{ $( c_{1}^{T}, c_{2}^{T}, ..., c_{|\sV^{\sub}|}^{T} )$. Note that this coloring is about nodes in $\gS$, not $\gS^k$.}
\caption{$\WLS^{k}$ Algorithm: 1-WL for subgraphs with their $k$-hop neighborhoods}
\vspace{0.1cm}
\end{algorithm}

After $\WLS^k$'s color refinement, we can get a feature vector (or a color histogram) $\vphi_{\gS}^{k} \in \sR^{\#\text{colors}}$ which is the aggregation of the refined colors in $\gS$. Each element $\vphi_{\gS}^{k}[c]$ is the number of occurrences of the color $c$ in the output of $\WLS^{k}$. We illustrate an example of a subgraph and its $\WLS^k$ outputs for different $k$s in Figure~\ref{fig:model}.

\paragraph{WL Kernels for Subgraphs (WLKS)}

Now, we suggest WLKS, the corresponding kernel matrix $\mK_{\WLS}^{k} \in \sR^{M \times M}$ of which is defined as the number of common subtree patterns of two subgraphs in their $k$-hop neighborhoods. That is, each element can be formulated as an inner product of a pair of $\vphi_{\ast}^{k}$. This WLKS is a valid kernel since $\mK_{\WLS}^{k}$ is positive semi-definite for all non-negative $k$s, as demonstrated in Proposition~\ref{prop:wls_k_psd}.


\begin{proposition}\label{prop:wls_k_psd}
\vspace{0.2cm}
$\forall k \geq 0, \mK_{\WLS}^{k}$ is positive semi-definite (p.s.d.).
\end{proposition}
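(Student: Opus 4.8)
The plan is to recognize that $\mK_{\WLS}^{k}$ is, by its very definition, a Gram matrix of the feature vectors $\vphi_{\gS_1}^{k},\dots,\vphi_{\gS_M}^{k}$, and then invoke the elementary fact that every Gram matrix is positive semi-definite. The only real work is bookkeeping, not analysis.

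First I would fix an arbitrary $k \geq 0$ and make precise the shared feature space in which the histograms live. Running $\WLS^{k}$ on each subgraph refines node colors drawn from a common, finite alphabet: every induced $k$-hop subgraph sits inside the single global graph $\gG$, and the bijective color-assignment step of Algorithm~\ref{alg:wl} can be carried out against one global table of color strings. Consequently, after padding with zeros for colors that do not occur in a given subgraph, each $\vphi_{\gS_i}^{k}$ may be regarded as a vector in $\sR^{d}$ for one common dimension $d = \#\text{colors}$, and by construction $(\mK_{\WLS}^{k})_{ij} = \langle \vphi_{\gS_i}^{k}, \vphi_{\gS_j}^{k}\rangle$.

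Next, for an arbitrary $\vc = (c_1,\dots,c_M)^{\top} \in \sR^{M}$, I would use bilinearity of the inner product to write
\[
\vc^{\top}\mK_{\WLS}^{k}\vc \;=\; \sum_{i=1}^{M}\sum_{j=1}^{M} c_i c_j \,\langle \vphi_{\gS_i}^{k}, \vphi_{\gS_j}^{k}\rangle \;=\; \Big\langle \sum_{i=1}^{M} c_i \vphi_{\gS_i}^{k},\; \sum_{j=1}^{M} c_j \vphi_{\gS_j}^{k}\Big\rangle \;=\; \Big\| \sum_{i=1}^{M} c_i \vphi_{\gS_i}^{k}\Big\|^{2} \;\geq\; 0 .
\]
Since $\vc$ and $k$ were arbitrary, this shows $\mK_{\WLS}^{k}$ is p.s.d. for every $k \geq 0$.

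There is essentially no hard step here; the only point that requires care is the first one — ensuring that the feature maps $\vphi_{\gS_i}^{k}$ for different $i$ are expressed in a common coordinate system, so that the entries of $\mK_{\WLS}^{k}$ are well defined as inner products. Once that is set up, positive semi-definiteness follows immediately from the Gram-matrix identity above, and the argument is completely uniform in $k$, covering in particular the degenerate cases $k=0$ (only the internal subgraph structure is used) and $k \geq D$ (all subgraphs share the same $k$-hop neighborhood, namely $\gG$ itself).
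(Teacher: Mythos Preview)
Your proposal is correct and follows essentially the same argument as the paper: both recognize that $\mK_{\WLS}^{k}$ is a Gram matrix of the feature vectors $\vphi_{\gS_i}^{k}$ and then verify positive semi-definiteness via the identity $\vc^{\top}\mK_{\WLS}^{k}\vc = \|\sum_i c_i \vphi_{\gS_i}^{k}\|^2 \geq 0$. Your version is slightly more explicit about why the histograms share a common coordinate system, but the substance is identical.
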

\begin{proof} \vspace{-0.3cm}
Each element in $\mK_{\WLS}^{k}$ is defined as an inner product of two feature vectors $\vphi_{\ast}^{k}$. This leads $\sum_{i=1}^{M} \sum_{j=1}^{M} c_i c_j \langle \vphi_{i}^{k}, \vphi_{j}^{k} \rangle = \langle \sum_{i=1}^{M} c_i \vphi_{i}^{k}, \sum_{j=1}^{M} c_j \vphi_{j}^{k} \rangle = \| \sum_{i=1}^{M} c_i \vphi_{i}^{k} \|^2 \geq 0$ for any real $c$. Thus, $\mK_{\WLS}^{k}$ is positive semi-definite.
\end{proof}

\subsection{Expressiveness difference of the WLS between $k$ and $k+1$}\label{sec:expressivness}

\begin{figure}
  \centering
  \includegraphics[width=\textwidth]{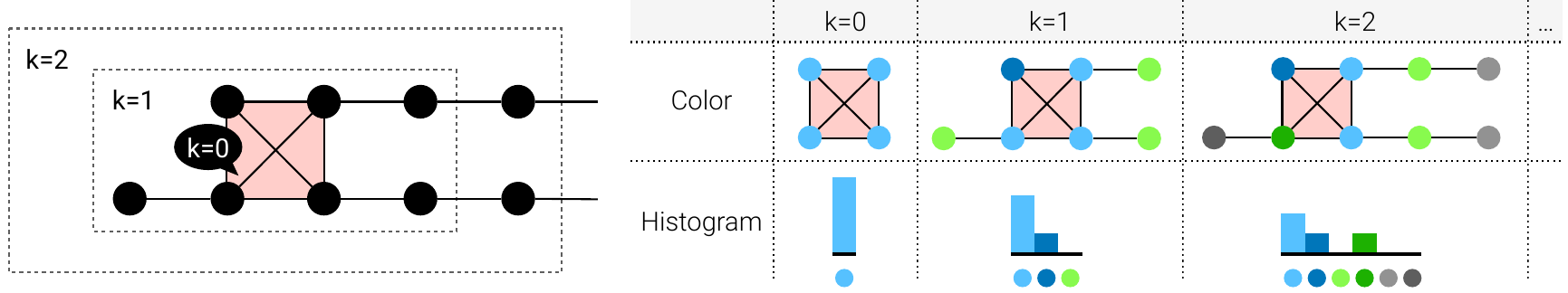}
  \vspace{-0.4cm}
  \caption{An example of $\WLS^k$ algorithm  (Algorithm~\ref{alg:wlks}) for $k \in \{ 0, 1, 2 \}$. \textbf{Left:} A subgraph (red shade) and its $k$-hop neighborhoods (dashed lines). \textbf{Right:} The outputs of $\WLS^k$ algorithm as colors and histograms for the left subgraph. We visualize each iteration of the algorithm in Appendix~\ref{appendix:model_steps}. The WLKS kernel matrix for each $k$ is constructed by an inner product of histogram pairs.}
  \vspace{-0.2cm}
  \label{fig:model}
\end{figure}
How do we choose $k$? Intuitively, selecting one large $k$ seems reasonable since the $k$-hop neighborhoods include the $k'$-hop structures of all smaller $k'$s. Against this intuition, we present a theoretical analysis that the $\WLS^{k+1}$ histogram is not strictly more expressive than the $\WLS^{k}$ histogram.

In Proposition~\ref{prop:wls_main}, we show that non-equivalent colorings of two subgraphs in $\WLS^{k+1}$ do not guarantee non-equivalent colorings in $\WLS^{k}$. This is also true for the inverse: equivalent colorings in $\WLS^{k+1}$ do not guarantee equivalent colorings in $\WLS^{k}$. We obtain the same conclusion as Proposition~\ref{prop:wls_main} for GNNs as powerful as the WL test (e.g., \citet{wang2022glass}), and some recent models are based on even less powerful GNNs than the WL test (e.g., \citet{kim2024translating}).

\begin{proposition}\label{prop:wls_main}
\vspace{0.1cm}
Given two subgraphs $\gS_1$ and $\gS_2$ of a global graph $\gG$ and $T$ iterations,
\begin{align}
\label{eq:wls_main_1} \WLS^{\db{k+1}} (\gS_1) \not\equiv \WLS^{\db{k+1}} (\gS_2) \nRightarrow \WLS^{\dr{k}} (\gS_1) \not\equiv \WLS^{\dr{k}} (\gS_2), \\ 
\label{eq:wls_main_2} \WLS^{\db{k+1}} (\gS_1) \equiv \WLS^{\db{k+1}} (\gS_2) \nRightarrow \WLS^{\dr{k}} (\gS_1) \equiv \WLS^{\dr{k}} (\gS_2),
\end{align}
for any $k < T$ where $\WLS^{k} (\gS) := \WLS^{k} (\gS, \gG, T)$ and `$\equiv$' denotes the equivalence of colorings.
\end{proposition}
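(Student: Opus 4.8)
The plan is to establish both non-implications by exhibiting explicit counterexamples — concrete global graphs $\gG$ and pairs of subgraphs $\gS_1, \gS_2$ — rather than arguing abstractly, since the claim is a negative (``does not guarantee''), so a single well-chosen instance per direction suffices. The two directions are genuinely independent phenomena, so I would treat them separately, though possibly a single clever construction handles both.

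\textbf{Direction \eqref{eq:wls_main_1}: distinguishable at $k{+}1$ but not at $k$.} The intuition is that the $k$-hop neighborhood of a subgraph is a \emph{proper induced subgraph} of its $(k{+}1)$-hop neighborhood, and the $1$-WL coloring of an induced subgraph is not monotone under taking supergraphs: adding the extra ring of nodes at distance $k{+}1$ can both merge colors (nodes that looked different at radius $k$ become indistinguishable once more context is symmetric) and split them. I would construct $\gG$ so that within the $k$-hop balls around $\gS_1$ and $\gS_2$ the two induced subgraphs are isomorphic (hence give identical $\WLS^k$ histograms), but the structure appearing exactly at distance $k{+}1$ differs asymmetrically between the two — for instance, attach a small gadget (a triangle versus a path of the same node count) to a boundary node that is only reached when the radius increases to $k{+}1$. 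Because $\WLS^{k}(\gS)$ only reads off the colors of the \emph{original} subgraph nodes (line 3 of Algorithm~\ref{alg:wlks}), it is enough to control what those target-subgraph nodes ``see'' within $k$ versus $k{+}1$ hops. The easiest regime is $k=0$: $\WLS^0$ depends only on the internal structure of $\gS_i$, so if $\gS_1 \cong \gS_2$ as abstract graphs but they are embedded in $\gG$ with different $1$-hop surroundings, then $\WLS^0(\gS_1) \equiv \WLS^0(\gS_2)$ while $\WLS^{1}(\gS_1) \not\equiv \WLS^{1}(\gS_2)$. This instantly gives \eqref{eq:wls_main_1} for $k=0$; a mild elaboration (padding with $k$ layers of identical ``collar'' structure around both subgraphs before the gadget) extends it to arbitrary $k < T$.

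\textbf{Direction \eqref{eq:wls_main_2}: indistinguishable at $k{+}1$ but distinguishable at $k$.} Here I would exploit the classic failure of $1$-WL: two non-isomorphic regular-ish graphs (the standard example being a $6$-cycle versus two disjoint triangles, or two $3$-regular graphs on the same number of vertices) receive the same $1$-WL color histogram. The construction embeds $\gS_1$ and $\gS_2$ so that at radius $k$ their induced neighborhoods differ in a way $1$-WL \emph{can} detect (e.g.\ different degree sequences at the boundary, or different local counts), but at radius $k{+}1$ the added nodes complete both into a pair of graphs that $1$-WL cannot tell apart (e.g.\ both become $3$-regular on the same vertex count), washing out the earlier distinction. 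Again the $k=0$ case is cleanest: pick $\gS_1, \gS_2$ with different internal $1$-WL histograms (so $\WLS^0(\gS_1)\not\equiv\WLS^0(\gS_2)$) but arrange the $1$-hop expansions to be the two halves of a $1$-WL-indistinguishable pair.

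\textbf{The main obstacle} is bookkeeping rather than conceptual difficulty: I must verify that the $\WLS$ output is compared \emph{only on the original subgraph nodes}, so the counterexample must make the target nodes themselves receive matching (resp.\ mismatching) colors — it is not enough for the ambient $k$-hop graphs to be (non-)isomorphic, since isomorphism of the $k$-hop graphs does not by itself pin down the colors restricted to a distinguished node subset unless the isomorphism maps $\sV^{\sub}_1$ to $\sV^{\sub}_2$. I would therefore build the gadgets with enough symmetry that the relevant isomorphism respects the subgraph-vs-neighborhood partition, and double-check the $T$-iteration bound (the hypothesis $k < T$ ensures $1$-WL has run long enough for the distinguishing structure at radius $k$ to propagate inward to the target nodes). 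A secondary, purely expository obstacle is making the figure/description self-contained; I would present the smallest working instances (likely on fewer than ten nodes each) and state the resulting histograms explicitly, deferring the routine verification of the color-refinement steps to an appendix.
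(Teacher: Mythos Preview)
Your proposal is correct and follows essentially the same approach as the paper: both directions are handled by exhibiting explicit counterexamples at $k=0$ (the paper points to a figure with exactly such small instances), and the mechanism you identify---isomorphic $k$-hop balls with differing $(k{+}1)$-hop structure for \eqref{eq:wls_main_1}, and a $1$-WL collision at radius $k{+}1$ that masks a detectable difference at radius $k$ for \eqref{eq:wls_main_2}---is precisely the paper's argument. Your write-up is in fact more careful than the paper's on two points the paper leaves implicit: that the comparison is restricted to the original subgraph nodes (so the isomorphism/indistinguishability must respect the $\sV^{\sub}$ partition), and that $k<T$ is what lets the distinguishing information propagate back to those nodes.
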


\begin{proof} \vspace{-0.2cm}
We will prove both statements by contradiction.
\paragraph{Proof of Equation~\ref{eq:wls_main_1}} \vspace{-0.3cm}
For the sake of contradiction, assume that whenever \(\WLS^{\db{k+1}} (\gS_1) \not\equiv \WLS^{\db{k+1}} (\gS_2)\), it must follow that \(\WLS^{\dr{k}} (\gS_1) \not\equiv \WLS^{\dr{k}} (\gS_2)\). Consider two subgraphs \(\gS_1\) and \(\gS_2\) of a global graph \(\gG\) such that their \(k\)-hop neighborhoods are isomorphic, i.e., \(\gS_1^{k} \equiv \gS_2^{k}\), but their \((k+1)\)-hop neighborhoods have non-identical subtree patterns of height-$T$ rooted at subgraphs. That is, within the \(k\)-hop radius, \(\gS_1\) and \(\gS_2\) have identical structures, but beyond that, their structures are distinguishable by the 1-WL algorithm (i.e., distinct colorings). This implies that $\WLS^{\dr{k}} (\gS_1) \equiv \WLS^{\dr{k}} (\gS_2)$, but $\WLS^{\db{k+1}} (\gS_1) \not\equiv \WLS^{\db{k+1}} (\gS_2)$ (e.g., the top part in Figure~\ref{fig:example}). This contradicts our assumption that \(\WLS^{\db{k+1}} (\gS_1) \not\equiv \WLS^{\db{k+1}} (\gS_2)\) implies \(\WLS^{\dr{k}} (\gS_1) \not\equiv \WLS^{\dr{k}} (\gS_2)\).
\paragraph{Proof of Equation~\ref{eq:wls_main_2}} \vspace{-0.3cm}
For the sake of contradiction, assume that whenever \(\WLS^{\db{k+1}} (\gS_1) \equiv \WLS^{\db{k+1}} (\gS_2)\), it must follow that \(\WLS^{\dr{k}} (\gS_1) \equiv \WLS^{\dr{k}} (\gS_2)\). Let \(\gG\) be a global graph, and consider its two subgraphs \(\gS_1\) and \(\gS_2\). Suppose that in \(\gG\), $(k+1)$-hop neighborhoods of \(\gS_1\) and \(\gS_2\) are identical, $\WLS^{\db{k+1}} (\gS_1) \equiv \WLS^{\db{k+1}} (\gS_2)$. However, within the $k$-hop neighborhoods, the local structures can differ such that the rooted subtree patterns of \(\gS_1\) and \(\gS_2\) up to height \(T > k\) are not identical, $\WLS^{\dr{k}} (\gS_1) \not\equiv \WLS^{\dr{k}} (\gS_2)$ (e.g., the bottom part in Figure~\ref{fig:example}). This contradicts our assumption that \(\WLS^{\db{k+1}} (\gS_1) \equiv \WLS^{\db{k+1}} (\gS_2)\) implies \(\WLS^{\dr{k}} (\gS_1) \equiv \WLS^{\dr{k}} (\gS_2)\).
\end{proof}



\begin{figure}[t]
  \centering
  \includegraphics[width=0.89\textwidth]{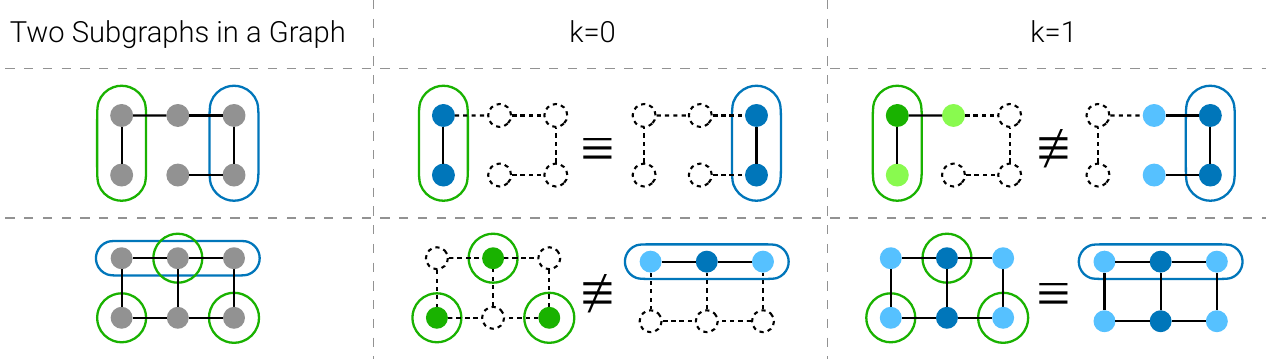}
  \vspace{-0.2cm}
    \caption{Example pairs of subgraphs that $\WLS^{k}$ produces equivalent colorings while $\WLS^{k+1}$ does not, and vice versa (where $k=0$). The gray area represents the subgraph.}
    \label{fig:example}
\end{figure}


Proposition~\ref{prop:wls_main} demonstrates that $\WLS^{k}$ cannot represent all the structural information of a smaller $k'$-hop structure ($k' < k$) from the perspective of graph isomorphism. This limitation suggests that relying solely on a single $k$ for $\WLS^k$ may be insufficient for encoding comprehensive information from various levels of structures. To address this, we propose combining $\WLS^{k}$ for multiple values of $k$, allowing the representation to capture both local and global structures effectively. Building on this insight, we design WLKS-$\sK$, a mixture of WLKS for multiple hops $k \in \sK$ where its kernel matrix $\mK_{\WLS}\text{-}\sK$ is a linear combination of $\mK_{\WLS}^{k}$.
\begin{equation}
\mK_{\WLS}\text{-}\sK \textstyle = \sum_{k \in \sK} \alpha_{k} \mK_{\WLS}^{k}\quad \where\ \alpha_{k} \in \sR^{+}.
\end{equation}
Note that WLKS-$\sK$ can be defined even when only one $k$ is used (e.g., $\mK_{\WLS}\text{-}\{0\} = \alpha_0 \mK_{\WLS}^{0}$ for WLKS-$\{0\}$). WLKS-$\sK$ is a valid kernel since a positive linear combination of p.s.d. kernels is p.s.d.~\citep{shervashidze2011weisfeiler}.

\subsection{Selecting $k$ for Minimal Complexity}

In WLKS, selecting appropriate values of $k$ during the $k$-hop subgraph sampling is crucial for balancing expressive power and complexity. As the number of nodes in the $k$-hop neighborhood grows exponentially with increasing $k$~\citep{hamilton2017inductive}, an unbounded increase in $k$ can result in substantial computation and memory overhead. To mitigate this, we strategically limit the choice of $k$ to two specific values: $k = 0$ and $k = D$, where $D$ is the diameter of the global graph $\gG$.

When $k = 0$, the WLKS consumes the least computation and memory by using only the internal structure of the subgraph without neighborhood sampling. In contrast, when $k$ is set to diameter $D$, every subgraph has the same $k$-hop neighborhood, which is the global graph $\gG$; thus, the WLS is performed just once on $\gG$ without per-subgraph computations. By using $0$ and $D$, WLKS-$\{0,D\}$ can capture both the local and the largest global structure of subgraphs. This approach offers a practical model that balances expressive power and efficiency, avoiding excessive computation and memory consumption from intermediate $k$ values.

\subsection{Computational Complexity}

The original WL Kernel has a computational complexity of $\gO \left( T \sum_i E_i^{\sub} + M T \sum_i N_i^{\sub} \right)$ for $M$ subgraphs, $T$ iterations, and the number of nodes $N_i^{\sub}$ and edges $E_i^{\sub}$ of the subgraph $i$~\citep{shervashidze2009fast}. When $k$ is 0, a set of subgraphs is identical to a set of individual graphs, so its complexity is the same as the original's. When $k$ is $D$, after performing the WL algorithm on the global graph once (i.e., $\gO (TE)$), the coloring of each subgraph is aggregated to a histogram (i.e., $\gO (\sum_i N_i^{\sub})$). Thus, the computational complexity of WLKS-$\{ 0, D \}$ is $\gO \left( T (E + \sum_i E_i^{\sub}) + M T \sum_i N_i^{\sub} \right)$.

We note that WLKS-$\{ 0, D \}$ do not perform $k$-hop neighborhood sampling, which adds a complexity of \( \gO (N^{\sub,k} + E^{\sub,k}) \) per subgraph from a breadth-first search from $\sV^{\sub}$. Learning SVM with pre-computed kernels has a complexity of $\gO (M^2)$ dependent on the number of subgraphs $M$, but this step is typically secondary to the WLS in practice.

\section{Incorporating Continuous Features for WLKS}
WLKS is designed to capture structural information but can be simply integrated with continuous features. This section introduces four methods to incorporate continuous features for WLKS.

\subsection{Combination with Kernels on Continuous Features}

WLKS can be linearly combined with kernel matrices $\mK_{\mX}$ derived from continuous features as in Equation~\ref{eq:wlks_cont}. This combination enables the model to account for structure and feature similarities between subgraphs. One straightforward way is to directly compute a kernel on features, which measures the similarity between subgraphs based on their feature vectors. Another approach involves applying the Continuous Weisfeiler-Lehman operator~\citep{togninalli2019wasserstein} to features, generating a kernel matrix. This operator extends the original WL framework to continuously attributed graphs.
\begin{equation}\label{eq:wlks_cont}
\alpha_{\text{structure}} \cdot \mK_{\WLS}\text{-}\sK + \alpha_{\text{feature}} \cdot \mK_{\mX}\quad \where\ \alpha_{\cdot} \in \sR^{+}.
\end{equation}
In both cases, the kernel matrix from continuous features tends to be denser and has a different scale compared to those from the WL histogram. To address this, we standardize the features before creating kernels and use the RBF and linear kernels.

\subsection{GNNs with the WLKS Kernel Matrix as Adjacency Matrix}

Another way to integrate features with WLKS is to use the kernel matrix as an adjacency matrix. Specifically, we consider the WLKS kernel matrix $\mK_{\WLS}$ as the adjacency matrix of a weighted graph where subgraphs $\sS$ serve as nodes. The rationale for this approach is that a kernel represents the similarity between data points.

In this graph, the edge weight between subgraphs $i$ and $j$ corresponds to $\mK_{\WLS}[i, j]$. By applying deep GNNs to this graph, we can leverage the expressive power of WLKS for structural information and the capabilities of GNNs for feature representation. For this paper, we adopt state-of-the-art GNN-based models, S2N+0 and S2N+A~\citep{kim2024translating}, for the graph created by WLKS-$\{0,D\}$ as an instantiation of this approach.

Given the original feature $\mX \in \sR^{N \times \text{\# features}}$, in S2N+0, the hidden feature $\mH \in \sR^{M \times \text{\# features}}$ is a sum of original features in the subgraph, and then a GNN on $\mK_{\WLS} \in \sR^{M \times M}$ is applied to get the logit matrix $\mY \in \sR^{M \times \text{\# classes}}$ for the prediction. S2N+A first encodes each subgraph as an individual graph with a GNN, readout its output to get the hidden feature $\mH$, then the other GNN on $\mK_{\WLS}$ is applied for the prediction. Formally,
\begin{align}
&\text{WLKS for S2N+0:  }
    \quad
    \mH[i, :] \textstyle
        = \mathbf{1}_{N^{\sub}}^{\top}   \mX[\sV_i^{\sub}, :],
    \ \ 
    \mY = \gnn (\mH, \mK_{\WLS}),
    \\
&\text{WLKS for S2N+A:  }
    \quad
    \mH[i, :] \textstyle
        = \mathbf{1}_{N^{\sub}}^{\top}
        \gnn_{1} ( \mX[\sV_i^{\sub}, :], \sA_{i}^{\sub}),
    \ \ 
    \mY = \gnn_{2} (\mH, \mK_{\WLS}),
\end{align}
where $\mathbf{1}_{n} \in \sR^{n \times 1}$ is a vector of ones. Since the kernel matrix is dense for GPUs, we sparsify and normalize it using the same method in the S2N's paper.

\section{Experiments}
\renewcommand{\arraystretch}{1.2}

\begin{table}[t]
\caption{Statistics of real-world and synthetic datasets.}
\centering
\resizebox{\columnwidth}{!}{%
\begin{tabular}{lllllllll}
\hline
 & \PPIBP & \HPONeuro & \HPOMetab & \EMUser & \Density & \CutRatio & \Coreness & \Component \\ \hline
\# nodes in $\gG$ & 17,080 & 14,587 & 14,587 & 57,333 & 5,000 & 5,000 & 5,000 & 19,555 \\
\# edges in $\gG$ & 316,951 & 3,238,174 & 3,238,174 & 4,573,417 & 29,521 & 83,969 & 118,785 & 43,701 \\
\# subgraphs ($\gS$) & 1,591 & 2,400 & 4,000 & 324 & 250 & 250 & 221 & 250 \\
\# nodes / $\gS$ & 10.2$_{\pm 10.5}$ & 14.4$_{\pm 6.2}$ & 14.8$_{\pm 6.5}$ & 155.4$_{\pm 100.2}$ & 20.0$_{\pm 0.0}$ & 20.0$_{\pm 0.0}$ & 20.0$_{\pm 0.0}$ & 74.2$_{\pm 52.8}$ \\
\# components / $\gS$ & 7.0$_{\pm 5.5}$ & 1.6$_{\pm 0.7}$ & 1.5$_{\pm 0.7}$ & 52.1$_{\pm 15.3}$ & 3.8$_{\pm 3.7}$ & 1.0$_{\pm 0.0}$ & 1.0$_{\pm 0.0}$ & 4.9$_{\pm 3.5}$ \\
Density ($\gG$)   & 0.0022              & 0.0304              & 0.0304              & 0.0028              & 0.0024              & 0.0067              & 0.0095              & 0.0002              \\ 
Avg. density ($\gS$) & 0.216 & 0.757 & 0.767 & 0.010 & 0.232 & 0.945 & 0.219 & 0.150 \\ 
\# classes & 6 & 10 & 6 & 2 & 3 & 3 & 3 & 2 \\
Labels & Single & Multi & Single & Single & Single & Single & Single & Single \\
Dataset splits & 80/10/10 & 80/10/10 & 80/10/10 & 70/15/15 & 80/10/10 & 80/10/10 & 80/10/10 & 80/10/10 \\ \hline
\end{tabular}%
}
\label{tab:dataset_stats}
\end{table}

This section outlines the experimental setup, covering the datasets, training details, and baselines.

\paragraph{Datasets}

We employ four real-world datasets (\PPIBP, \HPONeuro, \HPOMetab, and \EMUser) and four synthetic datasets (\Density, \CutRatio, \Coreness, and \Component) introduced by \citet{alsentzer2020subgraph}. Given the global graph $\gG$ and subgraphs $\sS$, the goal of the real-world benchmark is subgraph classification on various domains: protein-protein interactions (\PPIBP), medical knowledge graphs (\HPONeurob and \HPOMetab), and social networks (\EMUser). For synthetic benchmarks, the goal is to determine the structural properties (density, cut ratio, the average core number, and the number of components) formulated as a classification. Note that WLKS does not need pretrained embeddings. We summarize dataset statistics in Table~\ref{tab:dataset_stats}.

\paragraph{Models}

We experiment with five WLKS-$\sK$ where $\sK$ is $\{0\}, \{1\}, \{2\}, \{D\}, \{0, D\}$. Coefficients $\alpha$ is set to 1 when one $k$ is selected, and $\alpha_0 + \alpha_D = 1$ for WLKS-$\{0, D\}$. We do a grid search of five hyperparameters: the number of iterations ($\{1, 2, 3, 4, 5\}$), whether to combine kernels of all iterations, whether to normalize histograms, L2 regularization ($\{ 2^3 / 100, 2^4 / 100, ..., 2^{14} / 100 \}$), and the coefficient $\alpha_0 (\{ 0.999, 0.99, 0.9, 0.5, 0.1, 0.01, 0.001 \})$. When combining with kernels on continuous features (Equation~\ref{eq:wlks_cont}), we tune $\alpha_{\text{feature}}$ from the space of $\{ 0.0001, 0.001, 0.01, 0.05, 0.1, 0.15, 0.2, 0.25 \}$ and set $\alpha_{\text{structure}} = 1 / (1 + \alpha_{\text{feature}})$. For fusing WLKS-$\{0,D\}$ to S2N, we follow the GCNII-based~\citep{chen2020simple} architecture and settings presented in \citet{kim2024translating}.

\paragraph{Baselines}

We use state-of-the-art GNN-based models for subgraph classification tasks as baselines: Subgraph Neural Network~\citep[SubGNN;][]{alsentzer2020subgraph}, GNN with LAbeling trickS for Subgraph~\citep[GLASS;][]{wang2022glass}, Variational Subgraph Autoencoder~\citep[VSubGAE;][]{liu2023position}, Stochastic Subgraph Neighborhood Pooling~\citep[SSNP;][]{jacob2023stochastic}, and Subgraph-To-Node Translation~\citep[S2N;][]{kim2024translating}. Baseline results are taken from the corresponding research papers.

\paragraph{Efficiency Measurement}

When measuring the complete training time, we run models of the best hyperparameters from each model's original code, including batch sizes and total epochs, using Intel(R) Xeon(R) CPU E5-2640 v4 and a single GeForce GTX 1080 Ti (for deep GNNs).

\paragraph{Implementation}

All models are implemented with PyTorch~\citep{paszke2019pytorch} and PyTorch Geometric~\citep{Fey2019Fast}. We use the implementation of Support Vector Machines (SVMs) in Scikit-learn~\citep{pedregosa2011scikit}. 

\section{Results and Discussions}
In this section, we compare the classification performance and efficiency of WLKS and baselines. In addition, the performance of WLKS according to $\sK$ is demonstrated to exhibit the usefulness of the kernel combination. Finally, we investigate how integrating structures and features across subgraph datasets affects downstream performance.

\renewcommand{\arraystretch}{1.2}

\begin{table}[t]
\centering
\caption{Mean performance in micro F1-score on real-world and synthetic datasets over 10 runs. A subscript indicates the standard deviation. The higher the performance, the darker the blue color. The results of baselines are reprinted from respective papers.}
\label{tab:wlks_results}
\resizebox{\columnwidth}{!}{%
\begin{tabular}{ccccccccc}
\hline
Model &
  \PPIBP &
  \HPONeuro &
  \HPOMetab &
  \EMUser &
  \Density &
  \CutRatio &
  \Coreness &
  \Component \\ \hline
SubGNN &
  \cellcolor[HTML]{FFFFFF}$59.9_{\pm 2.4}$ &
  \cellcolor[HTML]{FFFFFF}$63.2_{\pm 1.0}$ &
  \cellcolor[HTML]{FFFFFF}$53.7_{\pm 2.3}$ &
  \cellcolor[HTML]{FFFFFF}$81.4_{\pm 4.6}$ &
  \cellcolor[HTML]{CCDCF9}$91.9_{\pm 1.6}$ &
  \cellcolor[HTML]{C2D5F7}$62.9_{\pm 3.9}$ &
  \cellcolor[HTML]{EEF3FD}$65.9_{\pm 9.2}$ &
  \cellcolor[HTML]{FFFFFF}$95.8_{\pm 9.8}$ \\
GLASS &
  \cellcolor[HTML]{E2EBFC}$61.9_{\pm 0.7}$ &
  \cellcolor[HTML]{6D9EEB}$68.5_{\pm 0.5}$ &
  \cellcolor[HTML]{92B7F1}$61.4_{\pm 0.5}$ &
  \cellcolor[HTML]{C9DAF8}$88.8_{\pm 0.6}$ &
  \cellcolor[HTML]{C9DAF8}$93.0_{\pm 0.9}$ &
  \cellcolor[HTML]{6D9EEB}$93.5_{\pm 0.6}$ &
  \cellcolor[HTML]{C9DAF8}$84.0_{\pm 0.9}$ &
  \cellcolor[HTML]{6D9EEB}$100.0_{\pm 0.0}$ \\
VSubGAE &
  - &
  \cellcolor[HTML]{DEE8FB}$65.2_{\pm 1.4}$ &
  \cellcolor[HTML]{E3ECFC}$56.3_{\pm 0.9}$ &
  \cellcolor[HTML]{E5EEFC}$85.0_{\pm 3.5}$ &
  - &
  - &
  - &
  - \\
SSNP-NN &
  \cellcolor[HTML]{C6D8F8}$63.6_{\pm 0.7}$ &
  \cellcolor[HTML]{7BA7ED}$68.2_{\pm 0.4}$ &
  \cellcolor[HTML]{C9DAF8}$58.7_{\pm 1.0}$ &
  \cellcolor[HTML]{C9DAF8}$88.8_{\pm 0.5}$ &
  - &
  - &
  - &
  - \\
S2N+0$_{\text{ GCNII}}$ &
  \cellcolor[HTML]{CADBF9}$63.5_{\pm 2.4}$ &
  \cellcolor[HTML]{C9DAF8}$66.4_{\pm 1.1}$ &
  \cellcolor[HTML]{8EB4F0}$61.6_{\pm 1.7}$ &
  \cellcolor[HTML]{DAE6FB}$86.5_{\pm 3.2}$ &
  \cellcolor[HTML]{FFFFFF}$67.2_{\pm 2.4}$ &
  \cellcolor[HTML]{FFFFFF}$56.0_{\pm 0.0}$ &
  \cellcolor[HTML]{FFFFFF}$57.0_{\pm 4.9}$ &
  \cellcolor[HTML]{6D9EEB}$100.0_{\pm 0.0}$ \\
S2N+A$_{\text{ GCNII}}$ &
  \cellcolor[HTML]{BED3F7}$63.7_{\pm 2.3}$ &
  \cellcolor[HTML]{72A1EC}$68.4_{\pm 1.0}$ &
  \cellcolor[HTML]{6D9EEB}$63.2_{\pm 2.7}$ &
  \cellcolor[HTML]{C3D6F8}$89.0_{\pm 1.6}$ &
  \cellcolor[HTML]{C3D6F8}$93.2_{\pm 2.6}$ &
  \cellcolor[HTML]{FFFFFF}$56.0_{\pm 0.0}$ &
  \cellcolor[HTML]{B4CDF5}$85.7_{\pm 5.8}$ &
  \cellcolor[HTML]{6D9EEB}$100.0_{\pm 0.0}$ \\ \hline
WLKS-$\{0,D\}$ &
  \cellcolor[HTML]{6D9EEB}$64.8_{\pm 0.0}$ &
  \cellcolor[HTML]{DCE7FB}$65.3_{\pm 0.0}$ &
  \cellcolor[HTML]{D2E0FA}$57.9_{\pm 0.0}$ &
  \cellcolor[HTML]{6D9EEB}$91.8_{\pm 0.0}$ &
  \cellcolor[HTML]{6D9EEB}$96.0_{\pm 0.0}$ &
  \cellcolor[HTML]{C9DAF8}$60.0_{\pm 0.0}$ &
  \cellcolor[HTML]{6D9EEB}$91.3_{\pm 0.0}$ &
  \cellcolor[HTML]{6D9EEB}$100.0_{\pm 0.0}$ \\ \hline
\end{tabular}%
}
\end{table}

\begin{table}[t]
\centering
\caption{Runtime in seconds of our model and baselines for the entire training stage and 1-epoch inference (validation set) for real-world datasets.}
\label{tab:wl4s_tt}
\resizebox{\textwidth}{!}{%
\begin{tabular}{ccccc|cccc}
\hline
Stage                   & \multicolumn{4}{c|}{Entire Training}     & \multicolumn{4}{c}{ Inference (1 epoch)}  \\ \hline
Model                   & \PPIBP & \HPONeuro & \HPOMetab & \EMUser & \PPIBP & \HPONeuro & \HPOMetab & \EMUser \\ \hline
SubGNN                  & N/A    & 1798.2    & 1082.1    & 108.1   & N/A    & 432.9     & 257.1     & 35.8    \\
GLASS                   & 1009.6 & 2462.6    & 1397.0    & 4597.4  & 8.2    & 27.0      & 26.4      & 39.0    \\
S2N+0$_{\text{ GCNII}}$ & 16.7   & 36.7      & 37.1      & 31.0    & 9.9    & 9.3       & 8.3       & 14.6    \\
S2N+A$_{\text{ GCNII}}$ & 14.9   & 78.0      & 72.2      & 39.0    & 8.4    & 11.1      & 9.6       & 13.4    \\ \hline
WLKS-\{0,D\}            & 3.5    & 25.2      & 10.9      & 9.6     & 1.0    & 11.7      & 2.7       & 1.8     \\ \hline
\end{tabular}%
}
\end{table}

\paragraph{Performance and Efficiency}

In Table~\ref{tab:wlks_results}, the classification performance of \WLKSZD{} and baselines on eight datasets is summarized. Our results show that our model outperforms the best-performing baseline in five out of eight datasets. Specifically, \WLKSZD{} achieves the highest micro F1-score on \PPIBP, \EMUser, \Density, \Coreness, and \Component. For \HPONeuro, \HPOMetab, and \CutRatio, our model shows similar performance to SubGNN but relatively lower performance than the state-of-the-art model.

In terms of efficiency, we present the training and inference time of our model and four representative baselines on real-world datasets in Table~\ref{tab:wl4s_tt}. Specifically, we measure the runtime for the entire training stage (including data and model loading, training steps, validation steps, and logging) and 1-epoch inference on the validation set. Note that an experiment on \PPIBPb with SubGNN cannot be conducted since it takes more than 48 hours in pre-computation. \WLKSZD{} demonstrates significantly faster training and inference times across all real-world datasets compared to other models (e.g., the shorter training time of $\times 0.01$ -- $\times 0.25$ and inference time of $\times 0.12$ -- $\times 0.43$). This metric does not include the pre-computation or embedding pretraining required in baselines, so the actual training of WLKS is more efficient. Additionally, WLKS does not require a GPU in training and inference, unlike other GNN baselines. 

Our results highlight the balance between efficiency and representation quality of WL histograms. Despite the loss of structural information from algorithmic simplicity, the empirical evidence underscores the task-relevant nature of the retained information in \WLKSZD{}. This is reflected in the superior performance of \WLKSZD{} with significantly smaller computational requirements on five out of eight datasets. However, the WL histogram lacks expressiveness in some subgraph tasks. In these cases, the expressiveness can be enhanced by using the structural encoding (or labeling tricks)~\citep{zhang2018link,li2020distance,zhang2021labeling,dwivedi2022graph}. For example, on the \CutRatiob dataset, where the performance of WLKS is low compared to the state-of-the-art, a linear combination with the inner product kernel of Random Walk Structural Encoding~\citep{dwivedi2022graph} significantly improves the performance from 60.0 to 96.0. However, this improvement is not observed on other datasets. Detailed discussions are in Appendix~\ref{appendix:se}.

\begin{table}[t]
\centering
\caption{Mean performance of WLKS-$\sK$ in micro F1-score by $\sK$: $\{0\}$, $\{1\}$, $\{2\}$, $\{D\}$, and $\{0,D\}$. The standard deviations are omitted (all 0). The higher the performance, the darker the blue color.}
\label{tab:wlks_k}
\resizebox{\columnwidth}{!}{%
\begin{tabular}{ccccccccc}
\hline
Model &
  \PPIBP &
  \HPONeuro &
  \HPOMetab &
  \EMUser &
  \Density &
  \CutRatio &
  \Coreness &
  \Component \\ \hline
WLKS-$\{0,D\}$ &
  \cellcolor[HTML]{6D9EEB}$64.8$ &
  \cellcolor[HTML]{6D9EEB}$65.3$ &
  \cellcolor[HTML]{6D9EEB}$57.9$ &
  \cellcolor[HTML]{6D9EEB}$91.8$ &
  \cellcolor[HTML]{6D9EEB}$96.0$ &
  \cellcolor[HTML]{6D9EEB}$60.0$ &
  \cellcolor[HTML]{6D9EEB}$91.3$ &
  \cellcolor[HTML]{6D9EEB}$100.0$ \\ \hline
WLKS-$\{0\}$ &
  \cellcolor[HTML]{FFFFFF}$34.0$ &
  \cellcolor[HTML]{FFFFFF}$31.4$ &
  \cellcolor[HTML]{FFFFFF}$26.4$ &
  \cellcolor[HTML]{FFFFFF}$67.3$ &
  \cellcolor[HTML]{6D9EEB}$96.0$ &
  \cellcolor[HTML]{FFFFFF}$36.0$ &
  \cellcolor[HTML]{75A3ED}$87.0$ &
  \cellcolor[HTML]{6D9EEB}$100.0$ \\
WLKS-$\{1\}$ &
  \cellcolor[HTML]{F7F9FE}$39.0$ &
  OOM &
  OOM &
  \cellcolor[HTML]{E2EBFC}$79.6$ &
  \cellcolor[HTML]{FFFFFF}$68.0$ &
  \cellcolor[HTML]{C9DAF8}$56.0$ &
  \cellcolor[HTML]{FFFFFF}$39.1$ &
  \cellcolor[HTML]{6D9EEB}$100.0$ \\
WLKS-$\{2\}$ &
  \cellcolor[HTML]{C9DAF8}$64.2$ &
  OOM &
  OOM &
  \cellcolor[HTML]{C9DAF8}$89.8$ &
  \cellcolor[HTML]{FFFFFF}$68.0$ &
  \cellcolor[HTML]{C9DAF8}$56.0$ &
  \cellcolor[HTML]{FFFFFF}$39.1$ &
  \cellcolor[HTML]{6D9EEB}$100.0$ \\
WLKS-$\{D\}$ &
  \cellcolor[HTML]{C9DAF8}$64.2$ &
  \cellcolor[HTML]{C9DAF8}$65.1$ &
  \cellcolor[HTML]{6D9EEB}$57.9$ &
  \cellcolor[HTML]{C9DAF8}$89.8$ &
  \cellcolor[HTML]{FFFFFF}$68.0$ &
  \cellcolor[HTML]{C9DAF8}$56.0$ &
  \cellcolor[HTML]{FFFFFF}$39.1$ &
  \cellcolor[HTML]{6D9EEB}$100.0$ \\ \hline
\end{tabular}%
}
\end{table}

\paragraph{Performance of WLKS-$\sK$ by $\sK$}

We highlight the importance of selecting the appropriate $\sK$ in Table~\ref{tab:wlks_k}. Specifically, the performance of WLKS-$\sK$ varies significantly depending on the choice of $\sK$. WLKS-$\{0,D\}$, which combines kernels of $0$ and $D$, consistently delivers strong results across datasets. WLKS-$\{0\}$ and WLKS-$\{D\}$ perform well independently in certain datasets, but their combination makes the better performance. This is clearly demonstrated in the \Corenessb experiment for predicting the average core number of nodes within a subgraph. This benchmark requires modeling internal structures, global structures, and subgraph positions. A significant improvement by combining kernels on \Corenessb aligns with our research motivation of capturing arbitrary interactions between and within subgraph structures. This ability is necessary when multiple $k$-hop neighborhoods are associated with the labels of the subgraph, and the performance can be improved from the complementary nature of WLKS capturing different $k$-hop structures.

The selection of $k$ is analogous to determining the number of layers in GNNs and can be treated as a hyperparameter optimized for specific tasks. Intermediate values of $k$ may capture important substructures in datasets with large diameters. However, the model empirically performs well even when $k$ is substantially smaller than the graph's diameter $D$. For instance, \PPIBP's largest component has a diameter of 8, yet $k=2$ performs as well as $k=D$. In addition, our empirical results showed no clear relation between the size of the graph, its global density, or the average density of subgraphs (as presented in Table~\ref{tab:dataset_stats}) and task performance with different $k$ values. This suggests that such structural properties are not key factors in determining the optimal $k$. Instead, the nature of the task and its structural requirements should guide the selection of $k$.

\begin{table}[t]
\centering
\caption{Mean performance in micro F1-score of WLKS variants integrated with continuous features over 10 runs. Used S2N models are based on GCNII~\citep{chen2020simple}. The higher the performance, the darker the blue color.}
\label{tab:wl4s_s2n}
\resizebox{\textwidth}{!}{%
\begin{tabular}{ccccccccc}
\hline
Model &
  \multicolumn{1}{l}{\PPIBP} &
  \multicolumn{1}{l}{\HPONeuro} &
  \multicolumn{1}{l}{\HPOMetab} &
  \multicolumn{1}{l}{\EMUser} &
  \multicolumn{1}{l}{\Density} &
  \multicolumn{1}{l}{\CutRatio} &
  \multicolumn{1}{l}{\Coreness} &
  \multicolumn{1}{l}{\Component} \\ \hline
WLKS-$\{0,D\}$ &
  \cellcolor[HTML]{FFFFFF}$64.8_{\pm 0.0}$ &
  \cellcolor[HTML]{FFFFFF}$65.3_{\pm 0.0}$ &
  \cellcolor[HTML]{FFFFFF}$57.9_{\pm 0.0}$ &
  \cellcolor[HTML]{C9DAF8}$91.8_{\pm 0.0}$ &
  \cellcolor[HTML]{6D9EEB}$96.0_{\pm 0.0}$ &
  \cellcolor[HTML]{C9DAF8}$60.0_{\pm 0.0}$ &
  \cellcolor[HTML]{6D9EEB}$91.3_{\pm 0.0}$ &
  \cellcolor[HTML]{6D9EEB}$100.0_{\pm 0.0}$ \\ \hline
WLKS-$\{0,D\}$ + Kernel on $\mX$ &
  \cellcolor[HTML]{FFFFFF}$64.8_{\pm 0.0}$ &
  \cellcolor[HTML]{DBE7FB}$65.9_{\pm 0.0}$ &
  \cellcolor[HTML]{D9E5FB}$59.1_{\pm 0.0}$ &
  \cellcolor[HTML]{C9DAF8}$91.8_{\pm 0.0}$ &
  \cellcolor[HTML]{6D9EEB}$96.0_{\pm 0.0}$ &
  \cellcolor[HTML]{6D9EEB}$64.0_{\pm 0.0}$ &
  \cellcolor[HTML]{6D9EEB}$91.3_{\pm 0.0}$ &
  \cellcolor[HTML]{6D9EEB}$100.0_{\pm 0.0}$ \\
WLKS-$\{0,D\}$ + Cont. WL Kernel on $\mX$ &
  \cellcolor[HTML]{FFFFFF}$64.8_{\pm 0.0}$ &
  \cellcolor[HTML]{C9DAF8}$66.2_{\pm 0.0}$ &
  \cellcolor[HTML]{C9DAF8}$59.6_{\pm 0.0}$ &
  \cellcolor[HTML]{6D9EEB}$93.9_{\pm 0.0}$ &
  \cellcolor[HTML]{6D9EEB}$96.0_{\pm 0.0}$ &
  \cellcolor[HTML]{6D9EEB}$64.0_{\pm 0.0}$ &
  \cellcolor[HTML]{6D9EEB}$91.3_{\pm 0.0}$ &
  \cellcolor[HTML]{6D9EEB}$100.0_{\pm 0.0}$ \\
WLKS-$\{0,D\}$ for S2N+0 &
  \cellcolor[HTML]{FFFFFF}$64.8_{\pm 1.5}$ &
  \cellcolor[HTML]{C5D8F8}$66.3_{\pm 0.6}$ &
  \cellcolor[HTML]{7BA8ED}$62.4_{\pm 1.1}$ &
  \cellcolor[HTML]{FFFFFF}$86.5_{\pm 2.4}$ &
  \cellcolor[HTML]{FFFFFF}$92.0_{\pm 0.0}$ &
  \cellcolor[HTML]{F1F6FE}$51.2_{\pm 3.9}$ &
  \cellcolor[HTML]{FFFFFF}$69.6_{\pm 1.9}$ &
  \cellcolor[HTML]{6D9EEB}$100.0_{\pm 0.0}$ \\
WLKS-$\{0,D\}$ for S2N+A &
  \cellcolor[HTML]{6D9EEB}$65.4_{\pm 2.4}$ &
  \cellcolor[HTML]{6D9EEB}$68.4_{\pm 1.1}$ &
  \cellcolor[HTML]{6D9EEB}$62.9_{\pm 1.9}$ &
  \cellcolor[HTML]{DCE7FB}$90.0_{\pm 3.3}$ &
  \cellcolor[HTML]{CFDEF9}$95.6_{\pm 2.8}$ &
  \cellcolor[HTML]{FFFFFF}$48.0_{\pm 0.0}$ &
  \cellcolor[HTML]{D3E1FA}$87.4_{\pm 4.1}$ &
  \cellcolor[HTML]{6D9EEB}$100.0_{\pm 0.0}$ \\ \hline
\end{tabular}%
}
\end{table}

\paragraph{Performance of WLKS Variants Integrated with Continuous Features}

Table~\ref{tab:wl4s_s2n} presents the results of WLKS variants combining WLKS-$\{0,D\}$ and continuous features. The performance of WLKS-$\{0,D\}$ for S2N improves over vanilla WLKS-$\{0,D\}$ on \PPIBP, \HPONeuro, and \HPOMetab. However, performance decreases on \EMUser, \Density, \CutRatio, and \Coreness. Applying GNNs to the WLKS kernel matrix requires kernel sparsification, which leads to additional loss of structural information. The enhanced features provided by deep neural networks can mitigate this trade-off. We interpret that the former set of benchmarks prioritizes features over structure, while the latter relies more on structural information. Using kernels on continuous features improves performance on \HPONeuro, \HPOMetab, \EMUser, and \CutRatio. Notably, on \EMUser, it achieves the best performance of 93.9 among all methods. For other datasets, no changes in performance are observed. Unlike the combination of GNN-based models, which can leverage the feature processing of neural networks, kernels on features provide limited performance gains from features.



\begin{figure*}[t]
  \centering
  \begin{subfigure}[t]{0.24\textwidth}
    \centering
    \includegraphics[width=\textwidth]{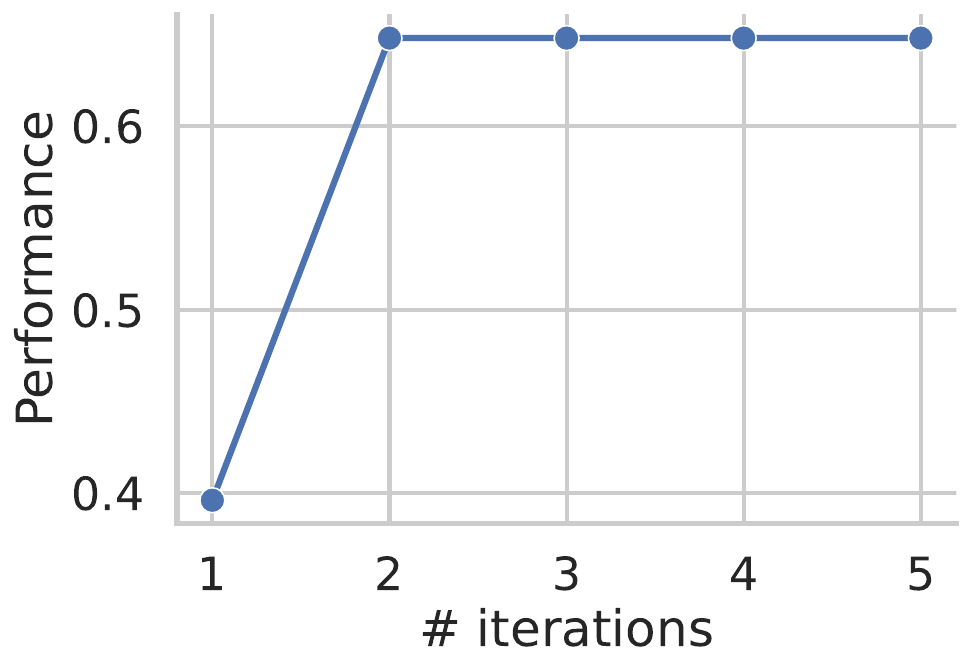}
    \vspace{-0.55cm}
    \caption{\PPIBP}
    \vspace{0.0cm}
  \end{subfigure}
  \begin{subfigure}[t]{0.24\textwidth}
    \centering
    \includegraphics[width=\textwidth]{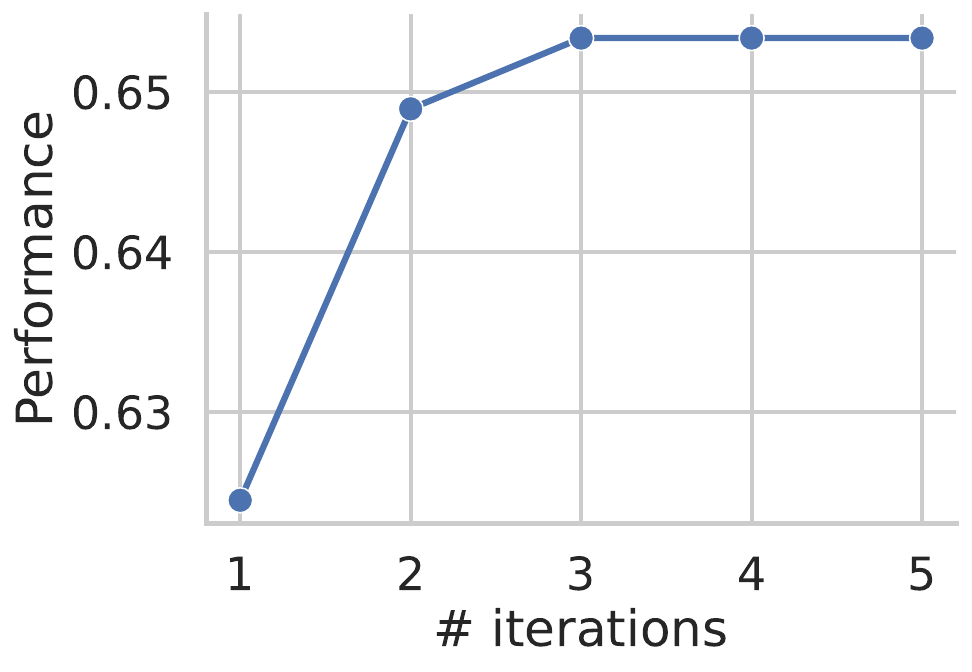}
    \vspace{-0.55cm}
    \caption{\HPONeuro}
    \vspace{0.0cm}
  \end{subfigure}
  \begin{subfigure}[t]{0.24\textwidth}
    \centering
    \includegraphics[width=\textwidth]{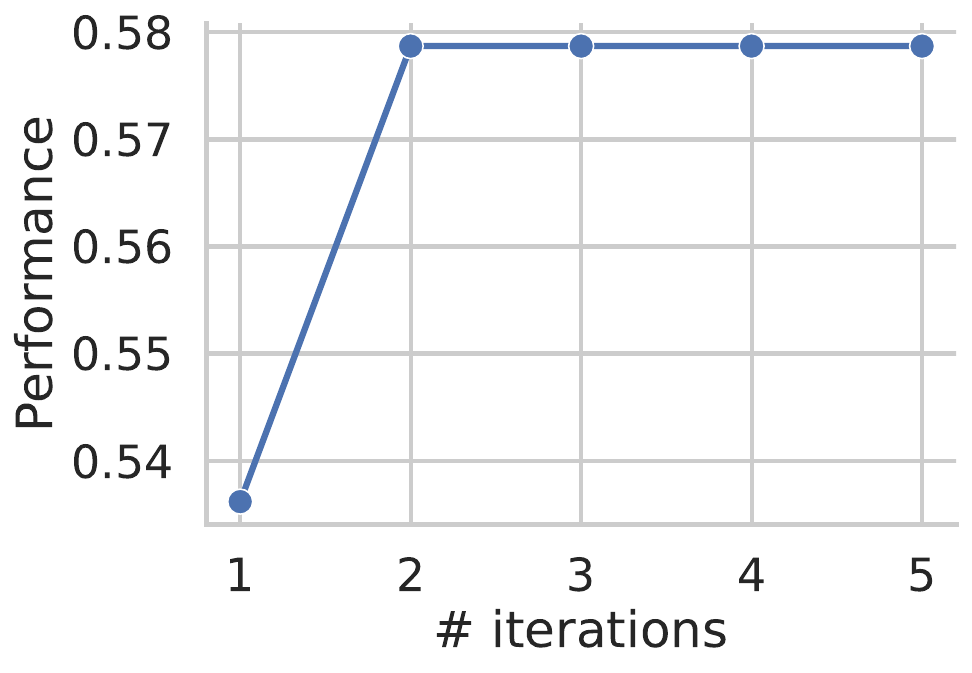}
    \vspace{-0.55cm}
    \caption{\HPOMetab}
    \vspace{0.0cm}
  \end{subfigure}
  \begin{subfigure}[t]{0.24\textwidth}
    \centering
    \includegraphics[width=\textwidth]{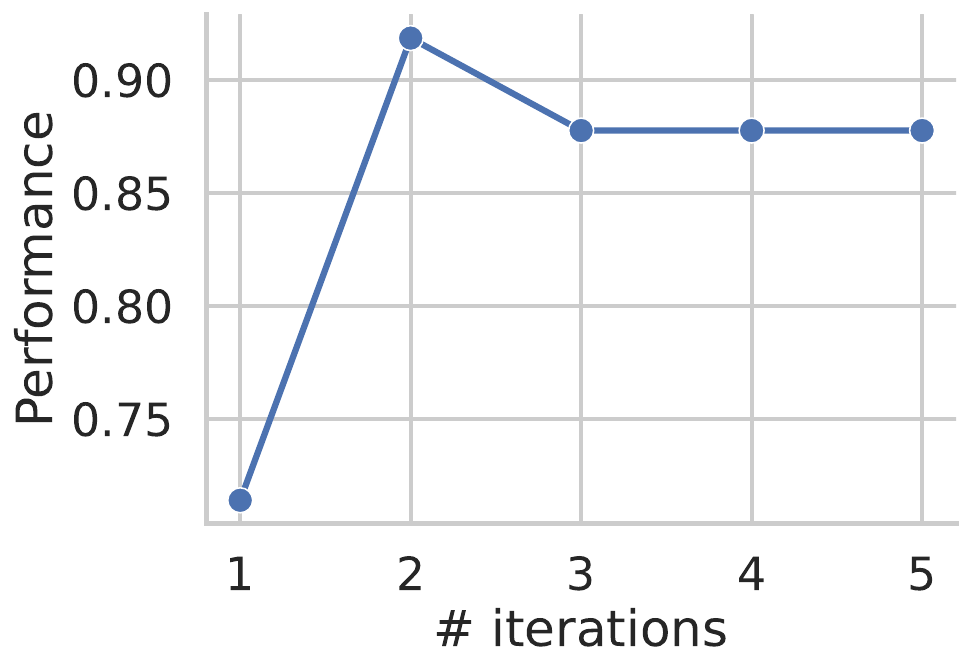}
    \vspace{-0.55cm}
    \caption{\EMUser}
    \vspace{0.0cm}
  \end{subfigure}
  \vspace{-0.09cm}
  \caption{Performance of WLKS-$\{0,D\}$ by the number of iterations $T$.}
  \label{fig:sensitivity_num_iters}
\end{figure*}

\begin{figure*}[t]
  \centering
  \begin{subfigure}[t]{0.24\textwidth}
    \centering
    \includegraphics[width=\textwidth]{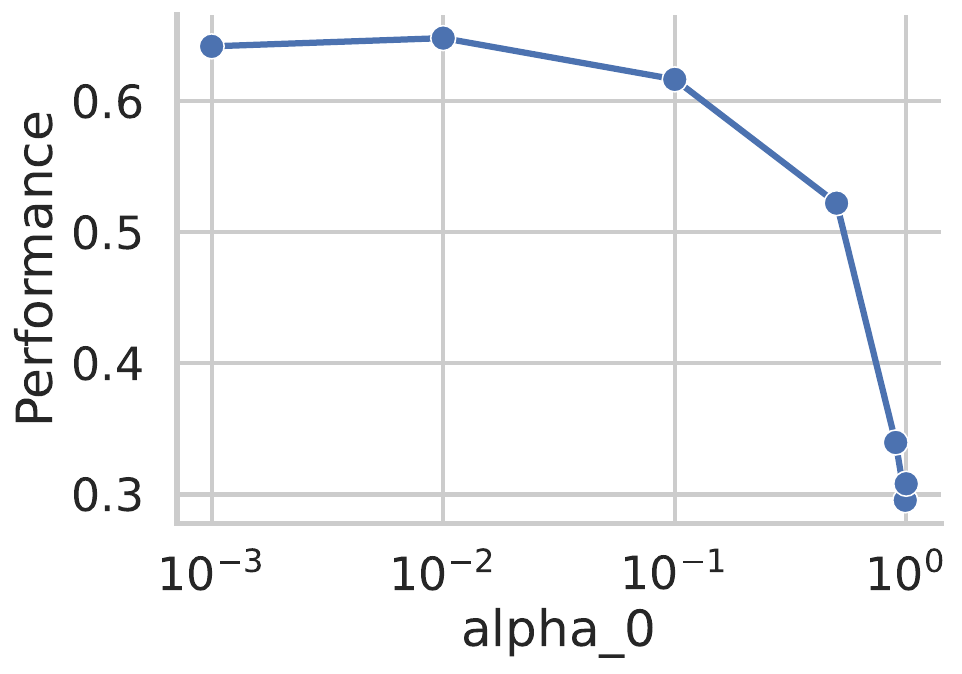}
    \vspace{-0.55cm}
    \caption{\PPIBP}
    \vspace{0.0cm}
  \end{subfigure}
  \begin{subfigure}[t]{0.24\textwidth}
    \centering
    \includegraphics[width=\textwidth]{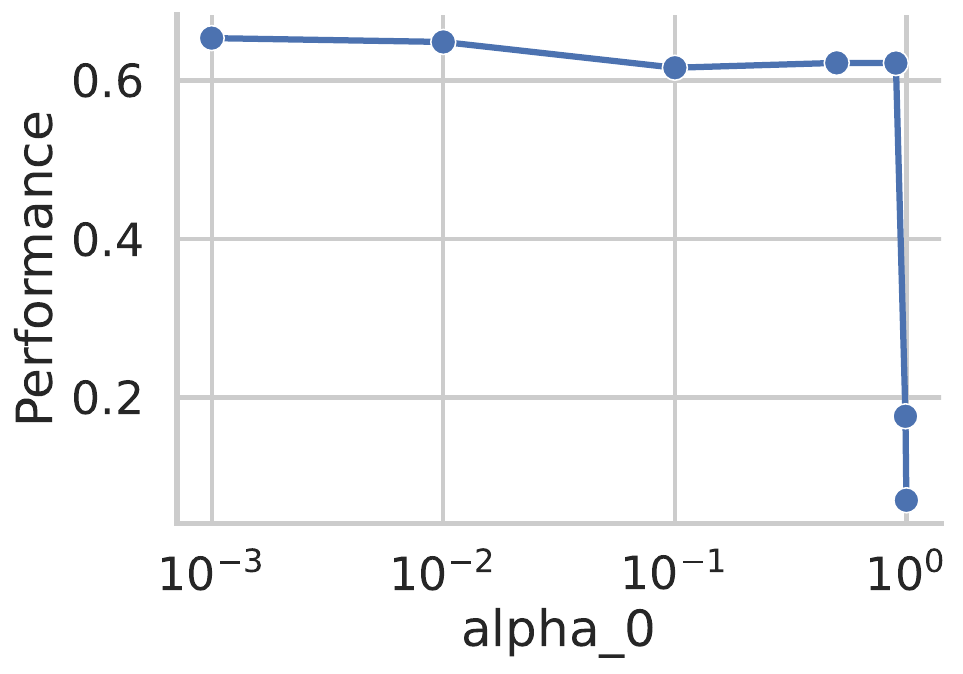}
    \vspace{-0.55cm}
    \caption{\HPONeuro}
    \vspace{0.0cm}
  \end{subfigure}
  \begin{subfigure}[t]{0.24\textwidth}
    \centering
    \includegraphics[width=\textwidth]{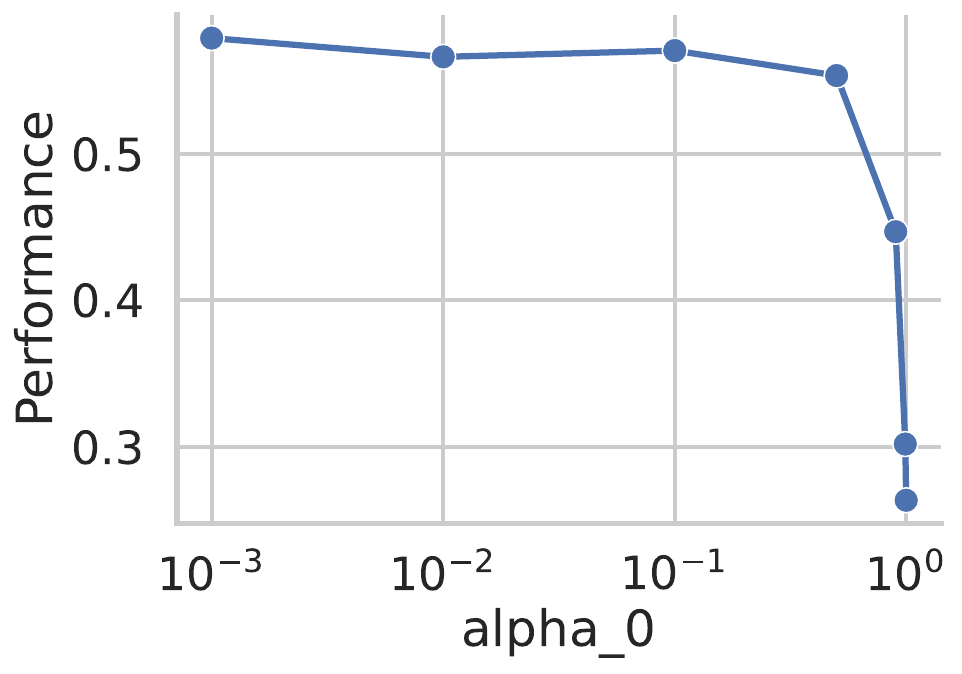}
    \vspace{-0.55cm}
    \caption{\HPOMetab}
    \vspace{0.0cm}
  \end{subfigure}
  \begin{subfigure}[t]{0.24\textwidth}
    \centering
    \includegraphics[width=\textwidth]{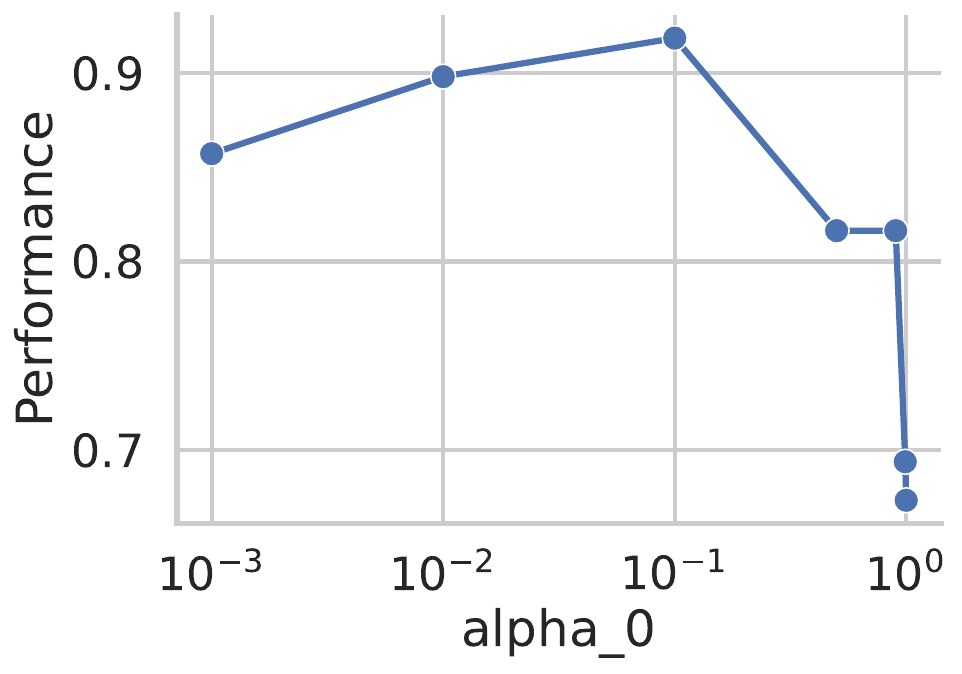}
    \vspace{-0.55cm}
    \caption{\EMUser}
    \vspace{0.0cm}
  \end{subfigure}
  \vspace{-0.09cm}
  \caption{Performance of WLKS-$\{0,D\}$ by the coefficient $\alpha_0$ in $\alpha_{0} \mK_{\WLS}^{0} + (1 - \alpha_{0}) \mK_{\WLS}^{D}$.}
  \label{fig:sensitivity_alpha_0}
\end{figure*}

\paragraph{Sensitivity Analysis of Hyperparameters}

Figures~\ref{fig:sensitivity_num_iters} and \ref{fig:sensitivity_alpha_0} demonstrate the performance sensitivity of the WLKS-$\{0,D\}$ with respect to the number of iterations $T$ and the kernel mixing coefficient $\alpha_0$. The best performance is achieved at iterations of $T = 2$ or $T = 3$, beyond which the WL coloring stabilizes and no further improvement is observed. For $\alpha_0$, the WLKS-$\{0,D\}$ is best-performed between $10^{-3}$ and $10^{-1}$, while performance drops sharply as $\alpha_0$ approaches 1. Since $\alpha_D = 1 - \alpha_0$ is larger than $\alpha_0$ in this range, this suggests that the subgraph labels rely more on global structures ($k = D$) than internal ones ($k=0$) for these datasets.

\section{Conclusion}
We proposed WLKS, a simple but powerful model for subgraph-level tasks that generalizes the Weisfeiler-Lehman (WL) kernel on induced $k$-hop neighborhoods. WLKS can enhance expressiveness by linearly combining kernel matrices from multiple $k$-hop levels, capturing richer structural information without redundant neighborhood sampling. Through extensive experiments on eight real-world and synthetic benchmarks, WLKS outperformed state-of-the-art GNN-based models on five datasets with reduced training times—ranging from $\times 0.01$ to $\times 0.25$ compared to existing models. Furthermore, WLKS does not need pre-computation, pre-training, GPUs, or extensive hyperparameter tuning.

Our method offers a promising and accessible alternative to GNN-based approaches for subgraph representation learning, but some tasks can still benefit from incorporating continuous features. We leave as future work the seamless integration of WLKS with Graph Neural Networks to leverage the expressive power of both structures and features.

\subsubsection*{Acknowledgments}
This work was supported by Institute of Information \& communications Technology Planning \& Evaluation (IITP) grant funded by the Korea government (MSIT) (No.RS-2022-II220184, Development and Study of AI Technologies to Inexpensively Conform to Evolving Policy on Ethics)

\bibliography{iclr2025_conference}
\bibliographystyle{iclr2025_conference}

\clearpage
\appendix
\section{Formal Comparison with Representative Related Work}\label{appendix:comparison}

In this section, we compare WLKS with highly related prior work, including Subgraph Neural Network~\citep[SubGNN;][]{alsentzer2020subgraph}, GNN with LAbeling trickS for Subgraph~\citep[GLASS;][]{wang2022glass}, Subgraph-To-Node Translation~\citep[S2N;][]{kim2024translating}, and GNN As Kernel~\citep[GNN-AK;][]{zhao2022from}.

While SubGNN employs message-passing within subgraphs, its reliance on ad hoc patch sampling and its separation of hand-crafted channels (e.g., position, neighborhood, structure) introduces complexity and potential sub-optimality in information aggregation. Without requiring hand-crafted patch designs or sampling strategies, WLKS captures a unified and expressive structural representation based on the theoretical rationale that structures at multiple levels are important.

GLASS uses separate message-passing for node labels that distinguish internal and global structures. This can enhance expressiveness by mixing representations from local and global structures similar to WLKS. However, GLASS has a limited ability to handle multiple labels in batched subgraphs; thus, a small batch size is required for GLASS. WLKS provides a generalized framework to represent fine-grained levels of structures around subgraphs, which can process multiple subgraphs efficiently by leveraging kernel methods.

S2N efficiently learns subgraph representations by compressing the global graph. However, this compression results in a loss of structural information and expressiveness in tasks where the global structure is important. In particular, since the approximation bound of S2N depends on how many subgraphs are spread out in the global graph, we cannot always expect a robust approximation. In contrast, WLKS does not rely on lossy compression and can yield informative representations using efficient kernel methods.

GNN-AK generates a graph representation by aggregating the information of each node's locally induced encompassing subgraph. Although there are local and global interactions, there are fundamental differences between WLKS. First, GNN-AK is designed for graph-level tasks, so the interactions between the graph itself and its local subgraphs are modeled. However, dealing with subgraph-level tasks is more challenging since modeling both the inside and the outside of the subgraph is required. WLKS encodes them by using multiple $k$-hop kernels. Second, GNN-AK has a large complexity that depends on the total number of neighbors and the sum of edges between neighbors, so it cannot be applied to a large global graph, unlike WLKS. In fact, the average number of nodes covered by the GNN-AK paper is much smaller, ranging from 25 to 430. In these perspectives, our study takes a complementary approach to GNN-AK, addressing aspects not covered in their work.

\section{Discussion on Relations between WL Isomorphism Test and WL Kernels}\label{appendix:wl_test_and_kernel}

In this section, we provide a detailed discussion to clarify the distinctions between Weisfeiler-Lehman (WL) isomorphism test and WL kernels, elaborating on how our work builds upon these foundational concepts.

The WL algorithm is recognized for testing graph isomorphism by iteratively refining node labels to capture the structural similarity between graphs. Its ability to distinguish non-isomorphic graphs is often considered a benchmark for evaluating the expressiveness of graph representation methods. While isomorphism distinguishability is theoretically significant, it can be overly restrictive in practical applications where the exact topological equivalence of graphs is not the primary concern. For example, many real-world tasks involve identifying structural similarities between graphs that may not be strictly isomorphic but share functional or semantic similarities.

Our work aligns more closely with the WL kernel framework~\citep{shervashidze2009fast,shervashidze2011weisfeiler}, which extends the application of the WL algorithm beyond isomorphism testing. WL kernels compute graph similarity based on histogram representations of subtree patterns generated by the WL algorithm. These histograms serve as compact summaries of graph structure, allowing for the comparison of graphs even when they are not isomorphic. In this context, WL kernels prioritize capturing similarities between graphs over distinguishing isomorphic structures. This broader perspective makes WL kernels particularly suitable for various graph-structured data, where the goal is to quantify structural resemblance rather than to test for isomorphism.

Building upon the WL kernel framework, our work introduces the WLKS method, which leverages WL histograms as measures of subgraph similarity. The key insight here is that WL histograms provide a rich representation of subtree patterns within graphs, enabling a nuanced comparison of internal and external structures of subgraphs.

\section{Step-by-Step Visualization of $\WLS^k$ algorithm}\label{appendix:model_steps}

In Figure~\ref{fig:model_steps}, we visualize each iteration of the $\WLS^k$ algorithm using an example of Figure~\ref{fig:model}.

\begin{figure}
  \centering
  \includegraphics[width=\textwidth]{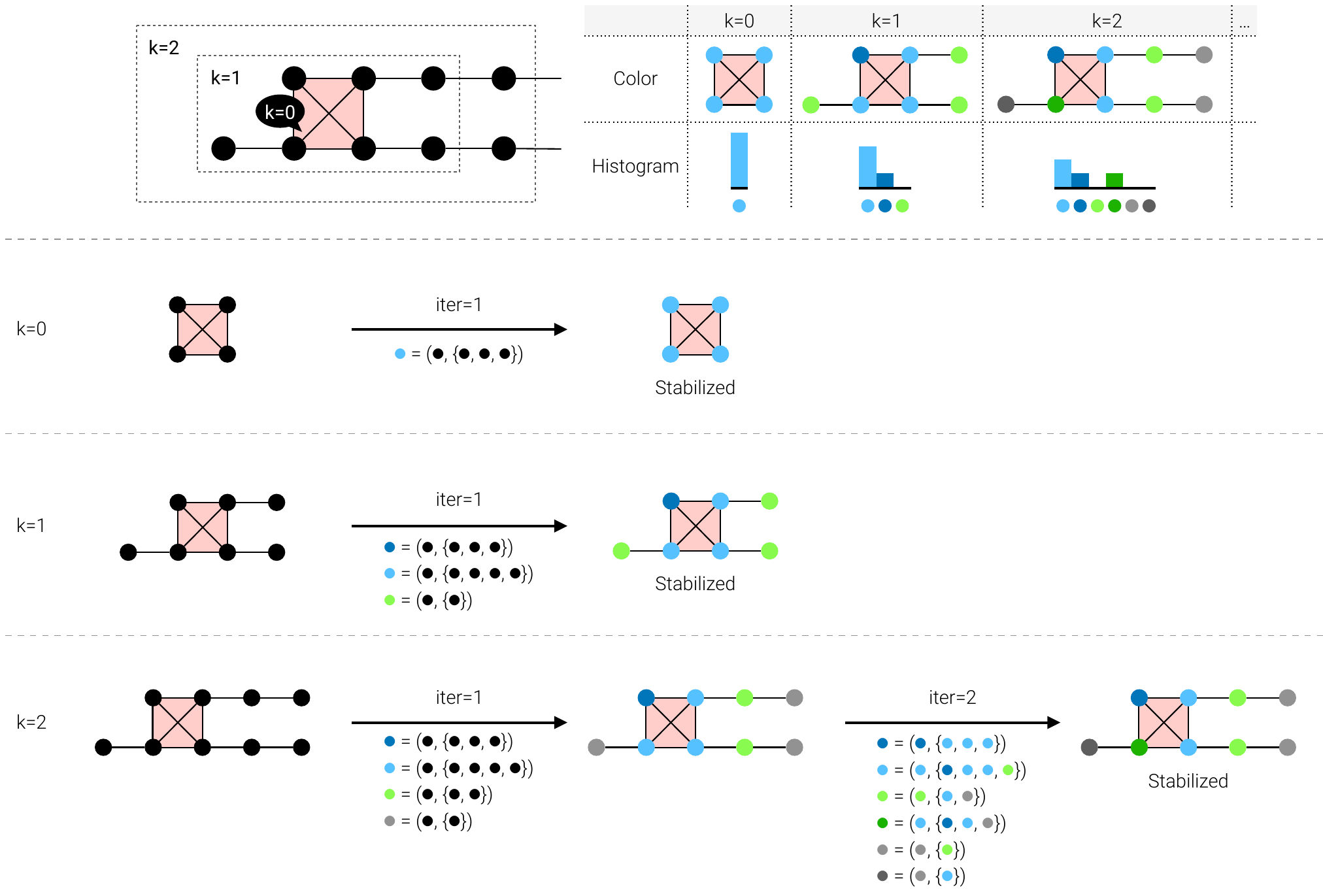}
  \vspace{-0.25cm}
  \caption{A step-by-step visualization of $\WLS^k$ algorithm (Algorithm~\ref{alg:wlks}) for $k \in \{ 0, 1, 2 \}$ using an example in Figure~\ref{fig:model}.}
  \vspace{-0.1cm}
  \label{fig:model_steps}
\end{figure}

\section{Using Kernels of Distance or Structural Encoding}\label{appendix:se}

It is well-known that additional structural features (often called labeling tricks, distance encoding, or structural encoding) can enhance the expressiveness of message-passing mechanisms under certain conditions~\citep{zhang2018link,li2020distance,zhang2021labeling,dwivedi2022graph,wang2022glass}. 

We argue that these approaches can have different effectiveness for subgraph-level tasks and kernel-based methods:
\begin{itemize}
    \item Zero-one labeling~\citep{zhang2021labeling,wang2022glass}: This binary labeling (assigning 0 to internal nodes and 1 to external nodes) shows limited expressiveness when aggregating labels to histograms as kernel inputs. Its histogram is represented as a length-2 vector (0 or 1), which only counts the number of nodes inside and outside the subgraph, thereby omitting finer structural details.
    \item SEAL's Double-radius node labeling~\citep{zhang2018link}: SEAL computes distances with respect to target structures (e.g., links) and can be applicable to $k$-hop neighborhoods of subgraphs but computationally challenging. While efficient for link prediction tasks due to the smaller size of enclosing subgraphs, extending this approach to general subgraphs becomes infeasible due to the computational overhead of calculating all pairwise distances.
    \item Distance Encoding (DE)~\citep{li2020distance} and Random Walk Structural Encoding (RWSE)~\citep{dwivedi2022graph}: DE uses landing probabilities of random walks from nodes in the node set to a given node, and RWSE uses diagonal elements of random walk matrices. In this line of work, random walk matrices are shown to encode structures in an expressive way, even on large-scale graphs.
\end{itemize}

We linearly combine \WLKSZD{} with an inner product kernel of RWSE (the walk length of 1 -- 64) sum-aggregated per subgraph, yielding a significant performance boost on the \CutRatiob dataset (from 60.0 to 96.0). However, this improvement is not shown across the other seven benchmarks we tested. We leave investigating which specific node labeling methods are effective, which aggregations of node labels are effective, and which kernels (e.g., linear, polynomial, RBF) best complement the specific node labeling as future work.

\end{document}